\newtheorem{definition}{Definition}
\newtheorem{lemma}{Lemma}
\newtheorem{theorem}{Theorem}
\newtheorem{corollary}{Corollary}
\newtheorem{proposition}{Proposition}
\newtheorem{remark}{Remark}
\newcommand{\R}{\mathbb{R}}
\newcommand{\e}{\epsilon}
\newcommand{\g}{\nabla}
\newcommand{\E}{\mathbb{E}}
\newcommand{\F}{\mathcal{F}}
\newcommand\copyrighttext{%
  \footnotesize \textcopyright 2019 IEEE. Personal use is permitted, but republication/redistribution requires IEEE permission. This paper is accepted at IEEE Control Systems Letters
  DOI: 10.1109/LCSYS.2019.2916467}
\newcommand\copyrightnotice{%
\begin{tikzpicture}[remember picture,overlay]
\node[anchor=south,yshift=10pt] at (current page.south) {\fbox{\parbox{\dimexpr\textwidth-\fboxsep-\fboxrule\relax}{\copyrighttext}}};
\end{tikzpicture}%
}
\title{
An Online Sample Based Method for Mode Estimation using ODE Analysis of Stochastic Approximation Algorithms}
\author{Chandramouli Kamanchi$^{1}$ Raghuram Bharadwaj Diddigi$^{2}$ Prabuchandran K.J.$^{3}$ Shalabh Bhatnagar$^{4,5}$
\thanks{$^{1}$ Department of Computer Science and Automation, Indian Institute of Science, Bangalore, India.
        {\tt\small chandramouli@iisc.ac.in}}%
\thanks{$^{2}$ Department of Computer Science and Automation, Indian Institute of Science, Bangalore, India.
        {\tt\small raghub@iisc.ac.in}}%
\thanks{$^{3}$ Amazon-IISc Postdoctoral fellow, Indian Institute of Science, Bangalore, India.
        {\tt\small prabuchandra@iisc.ac.in}}
\thanks{$^{4}$ Department of Computer Science and Automation and Department of Robert Bosch Centre for Cyber-Physical Systems, Indian Institute of Science, Bangalore, India.
        {\tt\small shalabh@iisc.ac.in}}%
\thanks{$^{5}$ Supported by RBCCPS, IISc and a grant from the Department of Science and Technology, India.}}
\begin{document}
\pagestyle{empty} 


\maketitle
\thispagestyle{empty}
\copyrightnotice

\begin{abstract}
One of the popular measures of central tendency that provides better representation and interesting insights of the data compared to the other measures like mean and median is the metric mode. 
If the analytical form of the density function is known, mode is an argument of the maximum value of the density function and one can
apply optimization techniques to find the mode. In many of the practical applications, the analytical form of the density is not known and only the samples from the distribution are available. Most of the techniques proposed in the literature for estimating the mode from the samples  assume that all the samples are available beforehand. Moreover, some of the techniques employ computationally expensive operations like sorting. In this work we provide a computationally effective, on-line iterative algorithm that estimates the mode of a unimodal smooth density given only the samples generated from the density. Asymptotic convergence of the proposed algorithm using an ordinary differential equation (ODE) based analysis is provided. We also prove the stability of estimates by utilizing the concept of regularization. Experimental results further demonstrate the effectiveness of the proposed algorithm.  
\end{abstract}

\begin{IEEEkeywords}
Statistical learning, Optimization algorithms, Machine learning.
\end{IEEEkeywords}
\section{Introduction}

\IEEEPARstart{T}{here} are many metrics that are used to represent the central tendency of the data. Among them, the popular ones are mean, median and mode. Mean is extensively studied due to its simplicity, linearity and ease of estimation via sample averages. However mean is susceptible to outliers. For example, when we are estimating the mean from a finite number of samples, one bad outlier can shift the estimate far away from the original mean. Also, in some of the applications, mean may not be the desired quantity to analyze. For example, it is often interesting to know the income that majority of the population in a country earns rather than the average income of the country which is a skewed quantity. 


In this work, we focus on finding the mode of a density when the analytical form of it is not known. That is, we are given only the samples of the distribution and we need to estimate the mode from these samples. We utilize stochastic approximation techniques to solve this problem. Stochastic approximation is a popular paradigm that is applied sucessfully to analyze random iterative models \cite{jaakkola1994convergence,zhou2017stochastic,zhou2017mirror}.

We first discuss some of the works that have been reported in the literature for estimating the mode. 
This problem of estimation of the mode of a unimodal density has been first considered in \cite{parzen1962estimation} where a sequence of density functions is iteratively constructed from the samples and the respective modes are calculated as maximum likelihood estimates. It is shown that these estimates of mode converge in probability to the actual mode.

We can broadly classify the solution techniques for the mode estimation problem into two groups. The first group comprises a non-parametric way of estimating the mode where the mode is estimated directly from the sample data without constructing the density function. The second group of methods comprises the parametric way of estimation in which the density function is approximately constructed and the mode is computed using optimization techniques. 

 In \cite{grenander1965some}, a non-parametric estimator (popularly known as Grenander's estimate) for estimating the mode directly from the samples is proposed. The later developments of mode estimation methods are based on the idea that the mode is situated at the center of an interval of selected length that contains majority of observed points. A sequence of nested intervals and intermediate mode estimates is constructed based on the above described idea and mode is taken to be the point that these intermediate estimates of mode converge to. Different ways of selecting the interval lengths are studied in \cite{chernoff1964estimation,wegman1971note} along with their convergence properties. A variant of this idea involves selecting the interval of shortest length that contains some desired number of points instead of deciding the lengths of interval. The estimation methods in \cite{dalenius1965mode,venter1967estimation,robertson1974iterative} are based on this idea. Detailed survey of the above discussed techniques along with their robustness is extensively studied in \cite{bickel2002robust, bickel2006fast}.

 In \cite{bickel2003robust}, a parametric method of estimating the mode is proposed. The idea here is to fit the samples to a normal distribution. Then the mode is estimated by calculating the mean of this fitted normal distribution. This idea is recently extended to find the multivariate mode in \cite{hsu2013efficient}. In \cite{griffin2005multiscale}, multivariate mean shift algorithm is proposed for estimating the multivariate mode from the samples. 
The idea here is to iteratively shift the estimated mode towards the actual mode using Gaussian kernels. In \cite{kirschstein2016minimum}, a minimum volume peeling method is proposed to estimate the multivariate mode from the sample data. The idea here is to iteratively construct subsets of the set of samples with minimum volume and discard the remaining points. The mode is then calculated by averaging the points in the constructed subset. This is based on the observation that mode is generally situated in the minimum volume set of a given fixed number of samples. An effective way of selecting the subset of points is discussed in \cite{kirschstein2016minimum}.


Most of the algorithms considered in the literature so far make the assumption that all the samples are available upfront. These techniques cannot be extended to the case of streaming data where the samples arrive online one at a time. Also, the non-parametric techniques (refer \cite{bickel2006fast}) require the samples to be in a sorted order. 

Our proposed algorithm is fundamentally different from the above algorithms in the sense that ours is an online algorithm that works with the data as it becomes available. This enables us to work with online samples without storing them in the memory. Also, we do not resort to any computationally expensive operations like sorting. In addition, our algorithm works for both univariate and multivariate distributions. We provide a convergence analysis of our proposed technique and show the robustness of our technique using simulation results.

Our work is closest to \cite{tsybakov1990recursive}. In \cite{tsybakov1990recursive} a gradient-like recursive procedure for estimating the mode is proposed and convergence is provided utilizing the theory of martingales. In our work to mitigate the lack of analytical form of density, we construct a kernel based representation (refer section II) of the density function and use stochastic gradient ascent to calculate mode. Our work is different from \cite{tsybakov1990recursive} in the following ways.

\begin{itemize}

\item Our proposed algorithm is based on assumptions different from those of \cite{tsybakov1990recursive}. Moreover, we prove the stability of the mode estimates by introducing the concept of regularization \cite{tikhonov2013numerical}.

\item We demonstrate the effectiveness of our algorithm by providing empirical evaluation on well-known distributions.

\item  Our convergence proof utilizes the well-known ODE based analysis of stochastic approximation algorithms. To the best of our knowledge, ours is the first work that makes use of ODE based analysis in the context of mode estimation.
\end{itemize}




\section{Background and Preliminaries}
To begin, suppose we have a probability space $(\Omega, \mathcal{F}, \mathbb{P})$ and a random vector $X: \Omega \to \mathbb{R}^{p}$ with a smooth density function $f(x).$ A mode of the random vector $X$ is defined as an argument of the maximum of $f.$ Suppose we have a unique mode i.e. $f$ has a unique maximizer then the mode is a measure of central tendency of the distribution of $X.$ A natural problem that arises is the estimation of the mode given a sequence of independent and identically distributed samples of $X$ denoted by $X_1,X_2,\cdots, X_n.$   In what follows we provide necessary formal definitions and prove some properties that are utilized in the motivation and the convergence of our proposed algorithm to solve this problem.
\subsection{Approximation of Identity}
\begin{definition}
The convolution of two functions $u$ and $v$ on $\mathbb{R}^p$ is defined as $$(u*v)(x):=\int_{\mathbb{R}^p}u(x-t)v(t)dt  \ \ \ (x \in \mathbb{R}^p).$$
\end{definition}
\begin{definition}
Given a function $K: \mathbb{R}^p \to \mathbb{R}$ and $\epsilon >0$ we define $$K_{\epsilon}(x):=\epsilon^{-p}K\Big{(}\frac{x}{\epsilon}\Big{)}.$$
For a given function $K$ as above, the family of functions $\{K_{\e} |\e>0\}$ is called approximation of the identity.
\end{definition}

\begin{lemma}
\label{kerlem}
Suppose $\int_{\R^{p}}|K(t)|dt<\infty$. Then, given any $\e>0$ 
\begin{enumerate}
\item $\int_{\R^{p}} K_{\e}=\int_{\R^{p}} K.$ \newline
\item $\int_{||x||>\delta}|K_{\e}| \to 0$ as $\e \to 0$, for any fixed $\delta >0.$
\end{enumerate}
\end{lemma}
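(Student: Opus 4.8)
The plan is to establish both claims through a single change of variables, $t = x/\e$, applied to the integral of $K_\e$. Under this substitution on $\R^p$ we have $x = \e t$ and $dx = \e^p\, dt$, so the Jacobian factor $\e^p$ exactly cancels the normalizing factor $\e^{-p}$ appearing in the definition $K_\e(x) = \e^{-p} K(x/\e)$. The hypothesis $\int_{\R^p} |K| < \infty$ guarantees that every integral below is finite and that the substitution is legitimate.

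For the first claim, I would write $\int_{\R^p} K_\e(x)\, dx = \int_{\R^p} \e^{-p} K(x/\e)\, dx$ and apply the substitution, obtaining $\int_{\R^p} \e^{-p} K(t)\, \e^p\, dt = \int_{\R^p} K(t)\, dt$, which is precisely $\int_{\R^p} K$. This settles part (1) with no further work.

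For the second claim I would apply the same substitution to $\int_{\|x\| > \delta} |K_\e(x)|\, dx$. The domain transforms accordingly: since $x = \e t$, the condition $\|x\| > \delta$ becomes $\e \|t\| > \delta$, that is $\|t\| > \delta/\e$. Hence $\int_{\|x\| > \delta} |K_\e| = \int_{\|t\| > \delta/\e} |K(t)|\, dt$. As $\e \to 0^+$ the radius $\delta/\e \to \infty$, so it remains only to argue that the tail $\int_{\|t\| > R} |K(t)|\, dt \to 0$ as $R \to \infty$.

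The main (indeed the only substantive) obstacle is this last tail estimate, since the change of variables and part (1) are entirely routine. I would justify it by the vanishing of the tails of an integrable function: writing $\int_{\R^p} |K| = \int_{\|t\| \le R} |K| + \int_{\|t\| > R} |K|$ and letting $R \to \infty$, the monotone convergence theorem gives $\int_{\|t\| \le R} |K| \uparrow \int_{\R^p} |K| < \infty$, which forces the complementary tail to tend to $0$. Combining this with the change-of-variables identity above completes part (2).
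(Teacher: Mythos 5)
Your proof is correct and follows essentially the same route as the paper: the substitution $t = x/\e$ for both parts, with part (2) reduced to the vanishing tails of the integrable function $|K|$ as the radius $\delta/\e \to \infty$. The only difference is that you spell out the tail estimate via monotone convergence, a detail the paper leaves implicit.
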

\begin{proof}
For statement 1 choose $y=\frac{x}{\e}.$ So $dx= \e^{p} \ dy.$ Then we have 
$$\int_{\R^{p}}K_{\e}=\frac{1}{\e^{p}}\int_{\R^p} K\Big{(}\frac{x}{\e}\Big{)} dx=\int_{\R^{p}}K(y)dy=\int_{\R^{p}}K.$$
Again for statement 2, let $y=x/\e$ and choose $\delta>0.$ Now
\begin{align*}
\int_{||x||>\delta}|K_{\e}(x)|dx & =\frac{1}{\e^{p}}\int_{||x||>\delta}\Big{|}K\Big{(}\frac{x}{\e}\Big{)}\Big{|}dx \\
& =\int_{||y||>\delta/\e}|K(y)|dy.
\end{align*}
Since $\int_{\R^{p}}|K(t)|dt<\infty$ and $\delta/\e \to \infty$ as $\e \to 0$, the proof is complete.
\end{proof}

\begin{theorem}
\label{apprthm}
Let $u_{\e} := u*K_{\e}$, where $\int_{\R^p}K = 1, ~ \|u\|_{1}<\infty$ and $K(x) = o(\|x\|^{-p})$ as $\|x\| \to \infty $. Then $u_{\e} \to u$ as $\e \to 0$ at each point of continuity of $u.$
\end{theorem}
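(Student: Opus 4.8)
The plan is to estimate the pointwise difference $u_\e(x)-u(x)$ at a fixed point $x$ of continuity of $u$ and show that it tends to $0$ as $\e \to 0$. First I would exploit the normalization: since $\int_{\R^p}K=1$, statement 1 of Lemma \ref{kerlem} gives $\int_{\R^p}K_\e = 1$ for every $\e>0$. This lets me write $u(x)=u(x)\int_{\R^p}K_\e(t)\,dt$ and fold it into the convolution integral, producing the centered expression
$$u_\e(x)-u(x)=\int_{\R^p}\big(u(x-t)-u(x)\big)K_\e(t)\,dt.$$
The entire argument then reduces to showing this integral is small for $\e$ small.

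Next I would fix $\eta>0$ and split the domain of integration at a radius $\delta$. By continuity of $u$ at $x$, I can choose $\delta>0$ so that $|u(x-t)-u(x)|<\eta$ whenever $\|t\|\le\delta$. On the inner ball $\{\|t\|\le\delta\}$ the integrand is then bounded by $\eta\,|K_\e(t)|$, so integrating and applying statement 1 of Lemma \ref{kerlem} to $|K|$ (note $|K_\e|=|K|_\e$) bounds this contribution by $\eta\int_{\R^p}|K_\e|=\eta\int_{\R^p}|K|=\eta\|K\|_1$, uniformly in $\e$.

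On the outer region $\{\|t\|>\delta\}$ I would separate the two pieces. The $u(x)$ piece is $|u(x)|\,\big|\int_{\|t\|>\delta}K_\e\big|$, which vanishes as $\e\to0$ directly by statement 2 of Lemma \ref{kerlem}. The piece involving $u(x-t)$ is where the tail-decay hypothesis $K(x)=o(\|x\|^{-p})$ becomes essential, and this is the step I expect to be the main obstacle: here $u$ is only assumed integrable, not bounded, so the crude estimate $\|u\|_\infty\int_{\|t\|>\delta}|K_\e|$ is unavailable. The idea is that $o(\|x\|^{-p})$ means $\|y\|^p|K(y)|\to0$ as $\|y\|\to\infty$; since $|K_\e(t)|=\e^{-p}|K(t/\e)|$ gives $\|t\|^p|K_\e(t)|=\|t/\e\|^p|K(t/\e)|$, and $\|t/\e\|>\delta/\e\to\infty$ on the outer region, for any $\eta'>0$ and all $\e$ small enough I can force $|K_\e(t)|\le\eta'/\|t\|^p\le\eta'/\delta^p$ there. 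Then
$$\int_{\|t\|>\delta}|u(x-t)|\,|K_\e(t)|\,dt\le\frac{\eta'}{\delta^p}\int_{\R^p}|u(x-t)|\,dt=\frac{\eta'}{\delta^p}\|u\|_1,$$
which, with $\delta$ held fixed, tends to $0$ as $\e\to0$ (letting $\eta'\to0$).

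Combining the three estimates, I would conclude $\limsup_{\e\to0}|u_\e(x)-u(x)|\le\eta\|K\|_1$, since both outer contributions vanish as $\e\to0$ while the inner one is dominated by $\eta\|K\|_1$ uniformly in $\e$. As $\eta>0$ is arbitrary, the limsup is $0$, which proves $u_\e(x)\to u(x)$. The crucial bookkeeping point is the ordering of quantifiers: $\delta$ is selected from $\eta$ \emph{first} and then held fixed while $\e\to0$, so the harmless factor $1/\delta^p$ in the outer bound never competes with the limit, and the only surviving term is the uniform $\eta\|K\|_1$ coming from the inner ball.
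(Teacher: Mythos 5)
Your proposal is correct and follows essentially the same argument as the paper's own proof: the same centering via $\int K_\e = 1$, the same split at radius $\delta$ chosen from continuity, the same use of Lemma \ref{kerlem} for the $u(x)$ tail term, and the same exploitation of $K(x)=o(\|x\|^{-p})$ to bound the $u(x-t)$ tail term by a constant times $\delta^{-p}\|u\|_1$ that vanishes as $\e\to 0$ (the paper phrases this by writing $|K(y)|=\mu(y)\|y\|^{-p}$ with $\mu\to 0$, which is the same estimate as your $\eta'$ argument). Your explicit remark on the quantifier ordering makes the bookkeeping slightly more transparent than the paper's write-up, but the route is identical.
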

\begin{proof}
Let $u$ be continuous at $x.$ By the definition of continuity, given $\eta >0,$ there exists $\delta >0$ such that $|u(x-t)-u(x)|<\eta$ if $||t||<\delta.$ Since 
$\int_{\R^{p}} K_{\e}=\int_{\R^{p}} K=1 $ from Lemma \ref{kerlem} and hypothesis we have,
\begin{align*}
&|u_\e(x)-u(x)|=\Bigg{|}\int u(x-t)K_{\e}(t)dt-u(x)\int K_{\e}(t)dt \Bigg{|}\\
&= \Bigg{|}\int\displaylimits_{||t||<\delta} \big{(}u(x-t)-u(x)\big{)}K_{\e}(t)dt \\
&\hspace{3.45cm}+\int\displaylimits_{||t||\geq\delta} \big{(}u(x-t)-u(x)\big{)}K_{\e}(t)dt \Bigg{|}\\
&\leq \eta \int\displaylimits_{||t||<\delta}|K_{\e}(t)|dt + \int\displaylimits_{||t||\geq\delta}|u(x-t)-u(x)||K_{\e}(t)|dt \\
&\leq \eta \|K\|_1+\int\displaylimits_{||t||\geq\delta}|u(x-t)||K_{\e}(t)|dt\\ 
& \hspace{5cm} +|u(x)|\int\displaylimits_{||t||\geq\delta}|K_{\e}(t)|dt.
\end{align*}
The third term approaches zero with $\e$ by Lemma \ref{kerlem}. It is enough to show that the second term approaches zero. 
From the hypothesis $|K(x)|=\mu(x)||x||^{-p}$ for some non-negative $\mu(x)$ where $\mu(x) \to 0$ as $||x|| \to \infty.$ We have
\begin{align*}
&\int\displaylimits_{||t||\geq\delta}|u(x-t)||K_{\e}(t)dt| \\ 
= & \int\displaylimits_{||t||\geq\delta}|u(x-t)|\mu\Big{(}\frac{t}{\e}\Big{)}||t||^{-p}dt\\
\leq &\delta^{-p}\Big{\{}\sup_{||t||\geq\delta}\mu\Big{(}\frac{t}{\e}\Big{)}\Big{\}}\int\displaylimits_{||t||\geq\delta}|u(x-t)|dt\\
\leq & \delta^{-p}\Big{\{}\sup_{||t||\geq\delta}\mu\Big{(}\frac{t}{\e}\Big{)}\Big{\}}\|u\|_1.
\end{align*}
Again from the hypothesis $\mu\big{(}\frac{t}{\e}\big{)} \to 0$ as $|\frac{t}{\e}| \to \infty.$ So $\sup\displaylimits_{||t||\geq\delta}\mu\big{(}\frac{t}{\e}\big{)} \to 0$ as $\e \to 0$. This concludes the proof. This theorem is utilized to obtain an approximate analytical form for the gradient of density (refer Section \ref{cov_analysis}). 
\end{proof}

\begin{corollary}
\label{apprcor}
Let $\g u_{\e}=\g u*K_{\e}$, where $\int_{\R^p}K = 1, ~ \|\g u\|_{1}<\infty$ and $K(x) = o(\|x\|^{-p})$ as $\|x\| \to \infty $. Then $\g u_{\e} \to \g u$ as $\e \to 0$ at each point of continuity of $\g u.$ Here the convolution, $\g u*K_{\e}$, is performed component wise.
\end{corollary}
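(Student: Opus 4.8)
The plan is to reduce the vector-valued claim to a family of scalar claims, each handled directly by Theorem \ref{apprthm}. Writing $\g u = (\partial_{x_1} u, \ldots, \partial_{x_p} u)$, the stipulation that the convolution be performed component wise means that the $i$-th coordinate of $\g u_{\e} = \g u * K_{\e}$ is precisely $(\partial_{x_i} u) * K_{\e}$. Thus it suffices to show, for each fixed $i \in \{1, \ldots, p\}$, that $(\partial_{x_i} u) * K_{\e} \to \partial_{x_i} u$ as $\e \to 0$ at each point of continuity of $\g u$.

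First I would verify that every scalar component $\partial_{x_i} u$ satisfies the hypotheses of Theorem \ref{apprthm}. The kernel $K$ is unchanged, so the conditions $\int_{\R^p} K = 1$ and $K(x) = o(\|x\|^{-p})$ carry over verbatim. For the integrability requirement, the assumption $\|\g u\|_{1} < \infty$ controls each coordinate: since $|\partial_{x_i} u(x)| \le \|\g u(x)\|$ pointwise, integrating over $\R^p$ gives $\|\partial_{x_i} u\|_{1} \le \|\g u\|_{1} < \infty$, so $\partial_{x_i} u \in L^1(\R^p)$ for every $i$.

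Next I would observe that a point $x$ at which the vector field $\g u$ is continuous is exactly a point at which each scalar component $\partial_{x_i} u$ is continuous. Fixing such an $x$ and applying Theorem \ref{apprthm} to the scalar function $\partial_{x_i} u$ then yields $\big((\partial_{x_i} u) * K_{\e}\big)(x) \to \partial_{x_i} u(x)$ as $\e \to 0$, for each $i$.

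Finally, since convergence of a vector in $\R^p$ is equivalent to convergence of each of its coordinates, assembling the $p$ scalar limits gives $\g u_{\e}(x) \to \g u(x)$, which completes the argument. There is no genuine obstacle here: the whole content is that differentiation and continuity decouple across coordinates, so the theorem applies slot by slot. The only step warranting a line of care is the passage from the vector integrability hypothesis $\|\g u\|_{1} < \infty$ to integrability of each individual component.
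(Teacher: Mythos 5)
Your proposal is correct and takes essentially the same route as the paper, which simply applies Theorem \ref{apprthm} to each component of $\g u$; your additional care in passing from $\|\g u\|_{1}<\infty$ to componentwise integrability and from continuity of the vector field to continuity of each coordinate just makes explicit what the paper leaves implicit.
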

\begin{proof}
The result is obtained by applying Theorem \ref{apprthm} to each component of $\g u$.
\end{proof}
\subsection{Stochastic Gradient Ascent}
Stochastic gradient ascent \cite{borkar2008stochastic} deals with the study of iterative algorithms of the type 
\begin{equation}
\label{algo:optm}
x_{k+1}=x_{k}+a_{k}[\g h(x_{k})+ N_{k+1}]. 
\end{equation}
Here $x_k \in \R^p, ~ k \geq 0$ are the parameters that are updated according to \eqref{algo:optm}. The function $h: \R^p \rightarrow \R$ is an underlying cost function whose maximum we are interested in finding. Also, $a_k, k \geq 0$ is a prescribed step-size sequence. Further, $N_{k+1}, k \geq 0$ constitute the noise terms. 
We state here a theorem that is utilized in the convergence analysis of our algorithm. Consider the following assumptions \cite{borkar2008stochastic, bhatnagar2012stochastic}.
\renewcommand{\labelenumi}{\textbf{A\arabic{enumi}.}}
\begin{enumerate}
    \item The step-sizes $a_k$, $k\geq 0$ satisfy the requirements: $$a_k > 0 ~ \forall k, \sum_k a_k = \infty, \sum_k a_k^2 < \infty.$$
    \item The sequence $N_k, k \geq 0 $ is a martingale difference sequence with respect to the following increasing sequence of sigma fields:
    $$\F_k:=\sigma\{x_0,N_1,\cdots,N_k\} , k\geq 0.$$
    Thus, in particular, $\forall k \geq 0$, $$\E[N_{k+1}|\F_k]=0 \text{ a.s}.$$ Further $N_k, k \geq 0$ are square integrable and
    $$\E[\|N_{k+1}\|^2|\F_k]\leq C(1+\|x_k\|^2) \text{ a.s.}$$ for a given constant $C >0.$
    \item The function $\g h: \R^p \to \R^p$ is Lipschitz continuous.
    \item The functions $\g h_c(x) := \frac{\g h(cx)}{c}, ~ c \geq 1, x \in \R^p$, satisfy $\g h_{c}(x) \rightarrow \g h_{\infty}(x)$ as $c \rightarrow \infty$, uniformly on compacts.
    Furthermore, the o.d.e 
    \begin{equation}
    \dot{x}(t)=\g h_{\infty}(x(t))
    \end{equation}
  \end{enumerate} has the origin as the unique globally asymptotically stable equilibrium. 
   
Consider the ordinary differential equation
\begin{equation}
\label{star}
\dot{x}(t)=\g h(x(t)).
\end{equation}
 Let $H$ denote the compact set of asymptotically stable equilibrium points of the ODE \eqref{star}.

\begin{theorem}
\label{bmt}
Under (A1)-(A4), $\sup_n\|x_n\|< \infty$ (stability) a.s. Further $x_k \rightarrow H$ almost surely as $k \rightarrow \infty$. 
\end{theorem}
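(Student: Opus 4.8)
This is precisely the Borkar--Meyn stability-and-convergence theorem, and the plan is to treat its two assertions in turn: first the a.s.\ boundedness of the iterates, and then convergence via the ODE method, the latter being essentially standard once the former is in hand. The common preliminary step is to control the martingale noise. Setting $\zeta_n := \sum_{m=0}^{n-1} a_m N_{m+1}$, assumption A2 makes $(\zeta_n, \F_n)$ a square-integrable martingale whose increments satisfy $\E[\|a_m N_{m+1}\|^2 \mid \F_m] \leq C a_m^2 (1 + \|x_m\|^2)$ a.s. Localizing on the event $\{\sup_n \|x_n\| < \infty\}$ and invoking the square-summability $\sum_k a_k^2 < \infty$ from A1, the quadratic variation of $\zeta_n$ stays finite, so by the martingale convergence theorem $\zeta_n$ converges a.s. The consequence is that the cumulative noise over any window of bounded ``ODE time'' vanishes asymptotically, which is what lets me compare the recursion \eqref{algo:optm} with its deterministic limits.

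For the stability claim I would exploit the scaling structure encoded in A4. Introduce the ODE time $t(n) := \sum_{k=0}^{n-1} a_k$ and let $\bar{x}(\cdot)$ be the piecewise-linear interpolation of $\{x_n\}$ along this time axis. Fix $T > 0$ large, partition $[0,\infty)$ into successive windows $[T_j, T_{j+1}]$ each of length about $T$, and define the scale factors $r_j := \max(\|\bar{x}(T_j)\|, 1)$. The heart of the argument is to rescale each window by $r_j$: the rescaled trajectory $\hat{x}(\cdot) := \bar{x}(\cdot)/r_j$ on $[T_j, T_{j+1}]$ approximately obeys the scaled field $\g h_{r_j}(x) = \g h(r_j x)/r_j$, which by A4 converges uniformly on compacts to $\g h_{\infty}$ as $r_j \to \infty$. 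Since the limiting ODE $\dot{x} = \g h_{\infty}(x)$ has the origin as its unique globally asymptotically stable equilibrium, a trajectory started on the unit ball is driven strictly inside a ball of radius, say, $1/4$ within time $T$. A Gronwall estimate, using the Lipschitz property A3 and the noise bound above, transfers this contraction to the rescaled iterates, yielding $\|\bar{x}(T_{j+1})\| \leq \tfrac{1}{2} r_j + (\text{bounded error})$. This linear recursion with contraction factor below one keeps the $r_j$ bounded, whence $\sup_n \|x_n\| < \infty$ a.s. I expect this rescaling step to be the principal obstacle, since it requires simultaneously controlling the discretization error, the vanishing noise, and the uniform-on-compacts convergence $\g h_{r_j} \to \g h_{\infty}$.

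With boundedness secured, convergence follows from the standard ODE method. On the probability-one event that the iterates are bounded and the noise $\zeta_n$ converges, the interpolated trajectory $\bar{x}(\cdot)$ is an asymptotic pseudotrajectory of the flow induced by $\dot{x} = \g h(x)$, where A3 guarantees that this flow is well defined and Lipschitz. Appealing to the asymptotic-pseudotrajectory characterization of limit sets, the limit set of $\{x_n\}$ is a nonempty, compact, connected, internally chain transitive invariant set of \eqref{star}. Since such a set must be contained in the compact set $H$ of asymptotically stable equilibria of \eqref{star}, I conclude that $x_k \to H$ almost surely as $k \to \infty$.
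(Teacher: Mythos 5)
Your proposal is correct and follows essentially the same route as the paper, whose entire proof is a citation to Theorem 2 of Chapter 2 and Theorem 7 of Chapter 3 of Borkar's book: your rescaling argument for stability and the asymptotic-pseudotrajectory argument for convergence are precisely the content of those cited theorems (the Borkar--Meyn machinery). The only soft spot is the final step, where passing from ``the limit set is internally chain transitive and invariant'' to ``the limit set lies in $H$'' needs the additional structure that the chain recurrent set of \eqref{star} consists of the asymptotically stable equilibria, but the paper glosses over exactly the same point by deferring to the cited theorems.
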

\begin{proof}
Follows as a consequence of Theorem 2 in chapter 2 and Theorem 7 in chapter 3 of \cite{borkar2008stochastic}.
\end{proof}

 \section{Motivation and Algorithm}
In this section we motivate and present our iterative algorithm for estimating the mode of a unimodal density. 
The idea of computing the mode is described below. Let $f$ denote the unimodal density function. As mode is the maximizer of the density function, we can estimate the mode by gradient ascent as follows:
\begin{align}\label{update1}
m_{n+1}=m_{n}+a_{n}\g f(m_n),
\end{align}

where $a_{n}$ and $m_{n}$ are the step-size and  current mode estimate, respectively, at time $n$. 

We introduce a function $g$ defined as follows:

\begin{align}\label{formulation2}
g(m) = f(m) - \frac{1}{2} \lambda \|m\|^{2},
\end{align}

where $\lambda >0$ is the regularization coefficient \cite{tikhonov2013numerical}. The idea is to find an $m$ that maximizes the function $g(m)$. This is done to maintain the stability of the estimates in our algorithm (refer proposition 3 in Section IV). Therefore the gradient ascent update is performed as follows:

\begin{align}\label{update2}
m_{n+1} &=m_{n}+a_{n}\g g(m_n) \\
&= m_{n}+a_{n} (\g f(m_n) - \lambda m_n) \label{modupeq}.
\end{align}
It remains to be shown that solution obtained using this update equation \eqref{modupeq} converges to the mode obtained using the update equation \eqref{update1} as $\lambda \xrightarrow{} 0$.
Let $$\hat{m}(\lambda):= \arg \max_{m} g(m) $$ and $$m^{*}:=\arg\max_{m} f(m).$$
It is easy to see that $$\hat{m}(0)=m^{*}.$$
From the continuity of $\arg\max (.)$ function given by the Maximum Theorem \cite{berge1997topological} we have as $\lambda \rightarrow 0$,  $$\hat{m}(\lambda)
\rightarrow \hat{m}(0)=m^{*}.$$ 

The update equation \eqref{modupeq}, however, needs the information of $\g f(m_n)$, which is not known. We therefore make use of the ideas in section II to estimate $\g f(m)$ as follows. To make the notations easy, we replace $m_n$ with $m$ and derive $\g f(m).$   
Applying Corollary \ref{apprcor} to $\g f$ with the kernel $K$ we get for small $\e >0$,
\begin{equation}
\label{eq:approx1}
\g f(m) \approx \g f_{\e}(m)=\int_{\R^{p}}\g f(m-t) K_{\e}(t)dt.
\end{equation}
By the properties of convolution
\begin{align}
\int_{\R^{p}}\g f(m-t) K_{\e}(t)dt & =\int_{\R^{p}}\g K_{\e}(m-t)f(t)dt \nonumber \\
&=\E_{X\sim f}[\g K_{\e}(m-X)] \label{eq:approx2}.
\end{align}
Note that there are several valid choices for function $K$ (also called kernel) to obtain approximation of identity. 

Now, \eqref{modupeq} can be re-written using stochastic gradient ascent as follows:
\begin{align} \label{final_update}
m_{n+1} = m_{n} + a_n(\g K_\epsilon (m_n - X_{n+1})-\lambda m_{n}),
\end{align}
where $X_{n+1}$ is the sample obtained at time $n+1$. 




In the following table, we indicate some of the popular kernels \cite{wheeden2015measure}.

\begin{table}[h!]
\begin{center}
\begin{tabular}{l|l}
\ \\
\textbf{Name of the Kernel} & \textbf{Analytical Form}  \\ 
\\ \hline
Gaussian                   & $\frac{1}{\sqrt{2\pi} }e^{-x^2/2}$ \\
\\ \hline
Cauchy                      & $\frac{1}{\pi(1+x^2)}$   \\                     \\ \hline
Fejer                       & $\frac{\sin^2(x)}{\pi x^2}$
        \\ 
        \\ \hline
        \\
Multivariate Gaussian  & $\frac{1}{{(2\pi)}^{n/2}}\e^{\frac{-x^Tx}{2}}$
\\
\\ 
\end{tabular}
\caption{Examples of kernels}
\label{kernel}
\end{center}
\end{table}

It is easily verified that the kernels defined in Table \ref{kernel} satisfy the hypotheses of Corollary \ref{apprcor}. 
The full algorithm for estimating the mode from  online streaming data is described in Algorithm 1.

\begin{algorithm}
\caption{Calculation of the mode}\label{alg:mode}
\hspace*{\algorithmicindent} \textbf{Input:} $\lambda$: A small regularization coefficient. \\
\hspace*{\algorithmicindent} $X_{n+1}:$ Sample input at time $n+1$. \\
\hspace*{\algorithmicindent} $m_{n}$ : Current estimate of mode \\
\hspace*{\algorithmicindent} $a_{n}$ : Step size sequence \\
\hspace*{\algorithmicindent} \textbf{Output:} mode $m_{n+1}$ estimated from samples  \\ \hspace*{\algorithmicindent} $X_1,X_2,\cdots, X_{n+1}$ 
\begin{algorithmic}[1]
\Procedure{Mode}{$X_{n+1},m_{n},a_{n}$}

\State $d_{n+1} = \nabla K_{\epsilon} (m_{n} - X_{n+1})-\lambda m_n $
\State $m_{n+1} = m_{n} + a_{n}d_{n+1}$
\State \textbf{return} $m_{n+1}$ 
\EndProcedure
\end{algorithmic}
\end{algorithm}

Let $m_0$ denote an initial mode estimate and  $\e$, a small constant. The algorithm works as follows. At time $n+1$, the algorithm takes as input the current mode estimate $m_{n}$ and the sample $X_{n+1}$. It then computes the direction $d_{n+1}$ and updates current approximation of the mode $m_n$ along $d_{n+1}$ as shown in step 2. The output of the algorithm is the updated mode estimate computed from samples obtained so far, i.e., $X_1,\ldots,X_{n+1}$. We prove the convergence of the algorithm in the next section.
\section{Convergence Analysis} \label{cov_analysis}
Let $\F_k=\sigma(m_j, 0\leq j\leq k ;X_j, 0< j\leq k), k \geq 0$ be a sequence of sigma fields. Observe that $\{\F_k\}$ forms a filtration. Let $d_{k+1}=\g K_{\e}(m_k-X_{k+1})- \lambda m_{k}.$ Note that $d_k$ is $\F_k$-measurable. Moreover $d_{k}$ is integrable i.e. $\E[\|d_{k}\|]<\infty$ under the assumption that $\g K_{\e} $ is integrable.
Now the basic algorithm can be written as
\begin{equation}
\label{algo:update}
m_{k+1}=m_{k}+a_{k}d_{k+1}=m_{k}+a_{k}(\E[d_{k+1}|\F_k]+N_{k+1}),
\end{equation}
where $N_{k+1}=d_{k+1}-\E[d_{k+1}|\F_k]$ is a mean zero term. Also, $\{N_{k+1}\}$ constitutes a martingale difference sequence (see proof of Proposition 1). Here $\E[.]$ is the expectation with respect to the density $f.$

Our convergence analysis rests on Theorem \ref{bmt}. Our algorithm is in the form of the general iterative scheme \eqref{algo:optm} with
$\g h(m_k)= \E[d_{k+1}|\F_k]= \g f_{\e}(m_k)-\lambda m_k$ and $N_{k+1}=d_{k+1}-\E[d_{k+1}|\F_k]$. We choose Gaussian kernel for our analysis of the algorithm. Similar analysis can be carried out for other choice of kernels. We first validate the assumptions of Theorem \ref{bmt} below. 

The choice $a_k=1/k, ~ k \geq 1$ assures assumption A1. The following proposition validates assumption A2.
\begin{proposition}
$(N_k,\F_k),k\geq 0$ is a martingale difference sequence with
$$\E[\|N_{k+1}\|^2|\F_k]\leq C(1+\|m_k\|^2),$$
for all $k\geq 0$ and for some $C>0$.
\end{proposition}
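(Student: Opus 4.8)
The plan is to establish the two assertions separately: the martingale difference property, which is essentially structural, and then the conditional second moment bound, which reduces to a uniform estimate on the kernel gradient.

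First I would verify that $(N_k,\F_k)$ is a martingale difference sequence. Since $N_{k+1}=d_{k+1}-\E[d_{k+1}\mid\F_k]$ with $d_{k+1}=\g K_{\e}(m_k-X_{k+1})-\lambda m_k$, the term $N_{k+1}$ is $\F_{k+1}$-measurable (as $m_k$ is $\F_k$-measurable and $X_{k+1}$ is $\F_{k+1}$-measurable) and integrable, using the stated fact that $\E[\|d_{k+1}\|]<\infty$. Taking the conditional expectation then gives $\E[N_{k+1}\mid\F_k]=\E[d_{k+1}\mid\F_k]-\E[d_{k+1}\mid\F_k]=0$ almost surely, which is exactly the defining property.

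Next I would reduce the second moment estimate to a bound on $\E[\|d_{k+1}\|^2\mid\F_k]$. Because $N_{k+1}$ is the centered version of $d_{k+1}$, the conditional variance inequality yields
$$\E[\|N_{k+1}\|^2\mid\F_k]=\E[\|d_{k+1}\|^2\mid\F_k]-\|\E[d_{k+1}\mid\F_k]\|^2\leq\E[\|d_{k+1}\|^2\mid\F_k].$$
Splitting $d_{k+1}$ by means of $\|a+b\|^2\leq 2\|a\|^2+2\|b\|^2$ then gives
$$\E[\|d_{k+1}\|^2\mid\F_k]\leq 2\,\E[\|\g K_{\e}(m_k-X_{k+1})\|^2\mid\F_k]+2\lambda^2\|m_k\|^2,$$
and the second term is already of the required form.

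The crux is bounding the kernel gradient term. For the Gaussian kernel $K_{\e}(x)=(2\pi)^{-p/2}\e^{-p}\,e^{-\|x\|^2/(2\e^2)}$, a direct differentiation gives $\g K_{\e}(x)=-(2\pi)^{-p/2}\e^{-p-2}\,x\,e^{-\|x\|^2/(2\e^2)}$, so that $\|\g K_{\e}(x)\|=(2\pi)^{-p/2}\e^{-p-2}\|x\|\,e^{-\|x\|^2/(2\e^2)}$. The key observation is that the scalar map $r\mapsto r\,e^{-r^2/(2\e^2)}$ attains a finite maximum (at $r=\e$), so $\|\g K_{\e}(\cdot)\|$ is bounded by a constant $M$ depending only on $\e$ and $p$, uniformly in its argument. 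Hence $\E[\|\g K_{\e}(m_k-X_{k+1})\|^2\mid\F_k]\leq M^2$, and combining the two pieces yields $\E[\|N_{k+1}\|^2\mid\F_k]\leq 2M^2+2\lambda^2\|m_k\|^2\leq C(1+\|m_k\|^2)$ with $C:=\max\{2M^2,2\lambda^2\}$.

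The main obstacle is precisely this uniform boundedness of $\g K_{\e}$, which is where the choice of kernel enters. For the Gaussian it follows cleanly because the gradient is a polynomial factor times a rapidly decaying exponential; the argument would have to be reworked (and could fail) for kernels whose gradients are not globally bounded, although the kernels in Table \ref{kernel} all admit the same treatment.
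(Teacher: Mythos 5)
Your proof is correct, and it differs from the paper's in two substantive ways. For the reduction step, you use the conditional variance identity $\E[\|N_{k+1}\|^2\mid\F_k]=\E[\|d_{k+1}\|^2\mid\F_k]-\|\E[d_{k+1}\mid\F_k]\|^2\leq\E[\|d_{k+1}\|^2\mid\F_k]$, whereas the paper applies $(a-b)^2\leq 2(a^2+b^2)$ followed by Jensen's inequality to get the cruder bound $4\E[\|d_{k+1}\|^2\mid\F_k]$; your route is sharper and cleaner, though the constant is immaterial. More significantly, for the key estimate you exploit the \emph{global boundedness} of $\g K_{\e}$ for the Gaussian kernel ($\|\g K_{\e}(x)\|\propto\|x\|e^{-\|x\|^2/(2\e^2)}$ is maximized at $\|x\|=\e$), so the kernel term contributes only a constant. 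The paper instead derives the linear-growth bound $\|\g K_{\e}(m)\|\leq C_0(1+\|m\|)$ from Lipschitz continuity of $\g K_{\e}$ (bounded Hessian), which forces it to control $\E[\|m_k-X_{k+1}\|^2\mid\F_k]$ and hence to invoke $C_1=\E[\|X_{k+1}\|^2]<\infty$ — a finite-second-moment condition on the sampling distribution that your argument dispenses with entirely. The trade-off is exactly the one you flag: your bound is tied to kernels with globally bounded gradient (which covers everything in Table \ref{kernel}, and in particular the Gaussian kernel the paper fixes for its analysis), while the paper's Lipschitz route would in principle extend to kernels whose gradient grows linearly, at the price of a moment assumption on $X$.
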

\begin{proof}
It is easy to see that $\E[N_{k+1}|\F_k]=\E[d_{k+1}-\E[d_{k+1}|\F_k]|\F_k]=0.$ From the foregoing, $N_k$ is $\F_k-$measurable and integrable $\forall k \geq 0$. So clearly $(N_k,\F_k)$ is a martingale difference sequence.
Now 
\begin{align*}
&\E[\|N_{k+1}\|^2|\F_k]\\
\leq & 2\Big{(}\E[\|d_{k+1}\|^2|\F_k]+\E\big{[}\|\E[d_{k+1}|\F_k]\|^2|\F_k\big{]}\Big{)}\\
\leq & 4\E[\|d_{k+1}\|^2|\F_k].
\end{align*}
The first inequality follows from the simple identity $(a-b)^{2}\leq 2(a^2+b^2)$, while the second inequality follows from a simple application of Jensen's inequality. Since the higher derivatives and in particular Hessian of $K_{\e}$ is bounded, it follows that $\g K_{\e}$ is Lipschitz continuous. We have 
\begin{align*}
\|\g K_{\e}(m)\|-\|\g K_{\e}(0)\|\leq\|\g K_{\e}(m)-\g K_{\e}(0)\|\leq L\|m\|,
\end{align*}
where $L > 0$ is the Lipschitz constant.
Hence for all $m$, 
\begin{align}
\label{eq:Lipschitz}
\|\g K_{\e}(m)\|\leq C_0(1+\|m\|),
\end{align} where $C_0=\max\{\|\g K_{\e}(0)\|,L\}.$
Therefore,
\begin{align*}
& \hspace{0.5cm} \E[\|N_{k+1}\|^2|\F_k] \\
&\leq 4\E[\|d_{k+1}\|^2|\F_k] \\
&\leq 8\E[\|\g K_{\e}(m_k-X_{k+1})\|^2+\|\lambda m_k\|^2|\F_k]\\
&\leq 8\E[C^2_0(1+\|(m_k-X_{k+1})\|)^2+\lambda ^2\|m_k\|^2|\F_k]\\
&\leq 8\E[(2C^2_0+(4C^2_0+\lambda^2)\|m_k\|^2+4C^2_0\|X_{k+1}\|^2)|\F_k]\\
&= 8(2C^2_0+(4C^2_0+\lambda^2)\|m_k\|^2+4C^2_0C_1)\\
&= C(1+\|m_k\|^2),
\end{align*}
where $C_1=\E[\|X_{k+1}\|^2]$ and $C=8\max\{2C_0^2+4C_0^2C_1,4C_0^2+\lambda^2\}.$
This completes the proof.
\end{proof}
The following lemma is useful in proving assumption A3 (see Proposition 2).
\begin{lemma}
\label{lemma2}
$\E[d_{k+1}|\F_k]=\g f_{\e}(m_k)-\lambda m_k.$
\end{lemma}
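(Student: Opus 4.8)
The plan is to compute $\E[d_{k+1}|\F_k]$ directly, exploiting linearity of conditional expectation together with the independence of the fresh sample $X_{k+1}$ from the history $\F_k$. Writing $d_{k+1}=\g K_{\e}(m_k-X_{k+1})-\lambda m_k$, I would first split the conditional expectation into the two pieces $\E[\g K_{\e}(m_k-X_{k+1})|\F_k]$ and $\lambda\,\E[m_k|\F_k]$. The second piece is immediate: since $m_k$ is $\F_k$-measurable by the very construction of the filtration $\F_k=\sigma(m_j,0\leq j\leq k;X_j,0<j\leq k)$, we have $\E[m_k|\F_k]=m_k$, so this term contributes exactly $-\lambda m_k$.

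For the first piece, the key observation is that $m_k$ is $\F_k$-measurable while $X_{k+1}$, being the newly drawn i.i.d.\ sample, is independent of $\F_k$. Hence I would invoke the standard substitution (``freezing'') property of conditional expectation: one may treat $m_k$ as a fixed constant and integrate $\g K_{\e}(m_k-\cdot)$ against the law of $X_{k+1}$, which is $f$. This gives
$$\E[\g K_{\e}(m_k-X_{k+1})|\F_k]=\int_{\R^{p}}\g K_{\e}(m_k-x)f(x)\,dx.$$
By the convolution identity already established in \eqref{eq:approx2}, the right-hand side is precisely $\g f_{\e}(m_k)=\E_{X\sim f}[\g K_{\e}(m_k-X)]$. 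Combining the two pieces then yields $\E[d_{k+1}|\F_k]=\g f_{\e}(m_k)-\lambda m_k$, as claimed.

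The only point requiring care is the rigorous justification of the freezing step, i.e.\ that conditioning on $\F_k$ allows us to hold $m_k$ fixed and average only over $X_{k+1}$. This rests on two ingredients: the independence of $X_{k+1}$ from $\F_k$, which follows from the i.i.d.\ sampling assumption, and the integrability of $\g K_{\e}$, assumed in the setup, which guarantees the conditional expectation is well defined (indeed the Lipschitz bound \eqref{eq:Lipschitz} on $\g K_{\e}$ together with $C_1=\E[\|X_{k+1}\|^2]<\infty$ already ensures the relevant integral is finite). Everything else is routine linearity, so I expect this measurability-and-independence bookkeeping to be the main---though minor---obstacle.
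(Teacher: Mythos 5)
Your proof is correct and follows the same overall decomposition as the paper's: split off the $-\lambda m_k$ term using $\F_k$-measurability of $m_k$, then reduce $\E[\g K_{\e}(m_k-X_{k+1})|\F_k]$ to the integral $\int \g K_{\e}(m_k-x)f(x)\,dx=\g f_{\e}(m_k)$. The only difference lies in how the central ``freezing'' step is justified. You invoke the standard substitution property of conditional expectation (if $X_{k+1}$ is independent of $\F_k$ and $m_k$ is $\F_k$-measurable, one may hold $m_k$ fixed and integrate against the law of $X_{k+1}$), backed by the integrability of $\g K_{\e}$. The paper instead justifies the same step by expanding $\g K_{\e}$ in a power series around $m_k$ and appealing to linearity of expectation together with independence, term by term. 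Your route is the more standard and arguably cleaner one: the freezing lemma requires only joint measurability and integrability, whereas the paper's power-series argument implicitly interchanges an infinite sum with an expectation, a step that itself would need justification and relies on analyticity of the kernel (true for the Gaussian, but an unnecessary restriction). Both arguments deliver the identity; yours does so under weaker hypotheses.
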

\begin{proof}
Now $\E[d_{k+1}|\F_k]=\E[\g K_{\e}(m_{k}-X_{k+1})|\F_{k}]- \lambda m_k.$
Also $\g K_{\e} $ is analytic and has a power series expansion around $m_{k}.$ Using power series of $\g K_{\e}$, linearity of the expectation and independence of $X_{k+1}$ from $\F_{k}$ we obtain $\E[\g K_{\e}(m_{k}-X_{k+1})|\F_{k}]=\E[\g K_{\e}(m_{k}-X)]=\g f_{\e}(m_k).$
\end{proof}
Owing to Lemma \ref{lemma2} our iterative update \eqref{algo:update} transforms into 
$m_{k+1}=m_{k}+a_{k}(\g f_{\e}(m_k)-\lambda m_k +N_{k+1})$ and we validate assumption A3 below.
\begin{proposition}
$\g f_{\e}(m) - \lambda m$ is Lipschitz continuous.
\end{proposition}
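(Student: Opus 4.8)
The plan is to exploit the convolution representation of $\g f_{\e}$ together with the Lipschitz property of $\g K_{\e}$ that was already established in the proof of Proposition 1. Since a finite sum of Lipschitz functions is again Lipschitz, and the map $m \mapsto \lambda m$ is trivially Lipschitz with constant $\lambda$, it suffices to prove that $\g f_{\e}$ itself is Lipschitz continuous; the claim for $\g f_{\e}(m) - \lambda m$ then follows with constant $L + \lambda$.

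First I would recall from \eqref{eq:approx2} (equivalently, from Lemma \ref{lemma2}) the representation
\[
\g f_{\e}(m) = \E_{X \sim f}[\g K_{\e}(m - X)].
\]
Given two points $m, m' \in \R^{p}$, I would write the difference as $\g f_{\e}(m) - \g f_{\e}(m') = \E_{X\sim f}[\g K_{\e}(m-X) - \g K_{\e}(m'-X)]$, move the norm inside the expectation via the inequality $\|\E[\,\cdot\,]\| \le \E[\|\cdot\|]$, and then apply the Lipschitz bound on $\g K_{\e}$ pointwise in $X$. Because $(m - X) - (m' - X) = m - m'$, the Lipschitz constant $L$ of $\g K_{\e}$ controls the integrand uniformly in $X$, so the expectation of the constant $L\|m - m'\|$ is again $L\|m - m'\|$. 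This yields $\|\g f_{\e}(m) - \g f_{\e}(m')\| \le L\|m - m'\|$, i.e. $\g f_{\e}$ is Lipschitz with the same constant $L$ as $\g K_{\e}$.

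The Lipschitz continuity of $\g K_{\e}$ is the ingredient I would lean on, and it is available because the Gaussian kernel has a uniformly bounded Hessian (as noted in the proof of Proposition 1), which makes $\g K_{\e}$ have bounded derivative and hence Lipschitz. I do not expect a genuine analytic obstacle here; the only point deserving a little care is the passage of the norm through the expectation, which is justified by Jensen's inequality since the Euclidean norm is convex, together with the integrability of $\g K_{\e}$ that guarantees all the expectations are finite. Combining the two Lipschitz estimates gives, for all $m, m' \in \R^{p}$,
\[
\|(\g f_{\e}(m) - \lambda m) - (\g f_{\e}(m') - \lambda m')\| \le (L + \lambda)\|m - m'\|,
\]
which is precisely the asserted Lipschitz continuity, thereby validating assumption A3.
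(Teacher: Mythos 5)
Your proposal is correct and follows essentially the same route as the paper's proof: triangle inequality to peel off the $\lambda$ term, the representation $\g f_{\e}(m)=\E[\g K_{\e}(m-X)]$, Jensen's inequality to move the norm inside the expectation, and the pointwise Lipschitz bound on $\g K_{\e}$, yielding the constant $L+\lambda$. No substantive difference from the paper's argument.
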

\begin{proof}
Now for any $x$ and $y$
\begin{align*}
&\|\g f_{\e}(x)-\g f_{\e}(y)-\lambda(x-y)\|\\
\leq &\|\E[\g K_{\e}(x-T)]-\E[\g K_{\e}(y-T)]\|+ \lambda\|x-y\|\\
\leq & \E\|[\g K_{\e}(x-T)]-\g K_{\e}(y-T)\| + \lambda\|x-y\|\\
\leq & (L+\lambda)\|x-y\|,
\end{align*}
where $L$ is the Lipschitz constant of $\g K_{\e}.$
\end{proof}
The following proposition proves assumption A4.
\begin{proposition} The ODE $\dot{m}=h_{\infty}(m)$ has the origin as its unique globally asymptotically stable equilibrium point.
\end{proposition}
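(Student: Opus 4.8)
The plan is to first identify the limiting vector field $\g h_\infty$ explicitly and then verify that the resulting ODE is globally asymptotically stable about the origin. In our setting $\g h(m) = \g f_\e(m) - \lambda m$, so that for $c \geq 1$
\[
\g h_c(m) = \frac{\g h(cm)}{c} = \frac{\g f_\e(cm)}{c} - \lambda m.
\]
Thus everything reduces to controlling the term $\g f_\e(cm)/c$ as $c \to \infty$.

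First I would show that $\g f_\e$ is a bounded function. For the Gaussian kernel, $\g K_\e$ is not merely Lipschitz but itself uniformly bounded: writing $\g K_\e(y) = \e^{-p-1}(\g K)(y/\e)$ and using $\g K(y) = -y\,K(y)$, the factor $y\,e^{-\|y\|^2/2}$ is bounded on $\R^p$, so $\|\g K_\e(y)\| \leq M$ for some constant $M = M(\e)$ and all $y$. Since $\g f_\e(m) = \E[\g K_\e(m - X)]$, it follows that $\|\g f_\e(m)\| \leq M$ for every $m$. Consequently
\[
\left\| \frac{\g f_\e(cm)}{c} \right\| \leq \frac{M}{c} \xrightarrow[c \to \infty]{} 0
\]
uniformly in $m$ over all of $\R^p$, in particular uniformly on compacts. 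This yields $\g h_\infty(m) = -\lambda m$, verifying the convergence part of assumption A4.

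It then remains to analyse the linear ODE $\dot m(t) = \g h_\infty(m(t)) = -\lambda m(t)$. The only solution of $-\lambda m = 0$ is $m = 0$, so the origin is the unique equilibrium. For global asymptotic stability I would use the Lyapunov function $V(m) = \tfrac{1}{2}\|m\|^2$, which is positive definite, radially unbounded, and satisfies $V(0) = 0$. Along trajectories,
\[
\dot V(m) = \langle m, \dot m \rangle = \langle m, -\lambda m \rangle = -\lambda \|m\|^2 < 0 \quad \text{for } m \neq 0,
\]
since $\lambda > 0$. By Lyapunov's theorem the origin is globally asymptotically stable, completing the verification of A4.

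The main obstacle is the boundedness step rather than the ODE analysis. If one had only the linear-growth bound $\|\g K_\e(m)\| \leq C_0(1 + \|m\|)$ from \eqref{eq:Lipschitz}, then $\g f_\e(cm)/c$ would retain an $O(\|m\|)$ contribution and would not vanish in the limit, so the clean identity $\g h_\infty(m) = -\lambda m$ would fail. It is precisely the decay of the Gaussian kernel's gradient that forces $\g f_\e$ to be bounded and makes the regularization term $-\lambda m$ dominate at infinity; this is why the coefficient $\lambda > 0$ is indispensable for stability.
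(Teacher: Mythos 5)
Your proof is correct, and it reaches the paper's conclusion $\g h_{\infty}(m)=-\lambda m$ by a genuinely different and somewhat cleaner route. The paper expands $\g K_{\e}(cm-x)$ explicitly for the Gaussian, splits the integrand $2(x-cm)e^{-\|cm-x\|^2/\e^2}$ into an $x$-part and a $cm$-part, kills the first using the finiteness of the first moment $\int_{\R^p}\|x f(x)\,dx\|<\infty$ together with $e^{-\|cm-x\|^2/\e^2}\le 1$, and kills the second by dominated convergence; this yields the pointwise limit. You instead observe that for the Gaussian kernel $\g K(y)=-yK(y)$ is uniformly bounded, hence so are $\g K_{\e}$ and $\g f_{\e}(m)=\E[\g K_{\e}(m-X)]$, so $\g f_{\e}(cm)/c\to 0$ uniformly over all of $\R^p$. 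Your argument buys two things the paper's does not make explicit: it dispenses with any moment condition on $f$, and it directly delivers the \emph{uniform on compacts} convergence that assumption A4 actually requires, whereas the paper's computation as written only establishes the pointwise limit. Conversely, the paper's term-splitting is more hands-on and would still give information for kernels whose gradient is not globally bounded. For the stability step you use the Lyapunov function $V(m)=\tfrac{1}{2}\|m\|^2$, while the paper simply exhibits the explicit solution $m(t)=m_0 e^{-\lambda t}$; both are valid, the explicit solution being slightly more elementary for this linear ODE. Your closing observation on the indispensability of $\lambda>0$ matches the paper's Remark 1.
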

\begin{proof}
From the definition of $h_{\infty}(m)$, see assumption A4, we have
\begin{align*}
h_{\infty}(m) &= \lim\displaystyle_{c \to \infty} \frac{\g f_{\e}(cm)-\lambda cm}{c}\\
&=\lim\displaystyle_{c \to \infty}\frac{\E[\g K_{\epsilon}(cm-X)]}{c}-\lambda m\\
&=\lim\displaystyle_{c \to \infty}\frac{1}{c}\int_{\R^{p}} \frac{2(x-cm)}{\e^3\sqrt{\pi }}e^{\frac{-\|cm-x\|^2}{\e^2}}f(x)dx-\lambda m\\
&=\lim\displaystyle_{c \to \infty} \frac{1}{c} \int_{\R^{p}} \frac{2x}{\e^3\sqrt{\pi}}e^{\frac{-\|cm-x\|^2}{\e^2}}f(x)dx \\
& \hspace{1cm} -\lim\displaystyle_{c \to \infty}\int_{\R^{p}} \frac{2m}{\e^3\sqrt{\pi }}e^{\frac{-\|cm-x\|^2}{\e^2}}f(x)dx - \lambda m\\
&=-\lambda m.
\end{align*}
Here
\begin{align*}
& \Bigg{ \|}\frac{1}{c} \int_{\R^{p}} \frac{2x}{\e^3\sqrt{\pi}}e^{\frac{-\|cm-x\|^2}{\e^2}}f(x)dx \Bigg{\|} \\
&\leq \frac{1}{c} \int_{\R^{p}}   \Big{\|}\frac{2x}{\e^3\sqrt{\pi}}e^{\frac{-\|cm-x\|^2}{\e^2}}f(x)dx \Big{\|} \\
&\leq \frac{1}{c} \int_{\R^{p}} \Big{\|} \frac{2x}{\e^3\sqrt{\pi}}f(x)dx \Big{\|} \\
& \rightarrow 0 \text{ as } c \rightarrow \infty,
\end{align*}
where the facts that $e^{\frac{-\|cm-x\|^2}{\e^2}} \leq 1$ and 
$\int_{\R^{p}} \|xf(x)dx\| < \infty$ are utilized.
By the application of Dominated Convergence Theorem \cite{wheeden2015measure} we have
\begin{align*}
& \frac{2m}{\e^3\sqrt{\pi }} \lim\displaystyle_{c \to \infty}\int_{\R^{p}} e^{\frac{-\|cm-x\|^2}{\e^2}}f(x)dx \\
& = \frac{2m}{\e^3\sqrt{\pi }} \int_{\R^{p}} \lim\displaystyle_{c \to \infty}e^{\frac{-\|cm-x\|^2}{\e^2}}f(x)dx =0.
\end{align*}
So we have that $h_{\infty}(m)=-\lambda m.$ \\
Now for the system $\dot{m}=h_{\infty}(m)=-\lambda m$, clearly the origin is an equilibrium point. Also for any initial point $m_0$, $m(t)=m_0\exp(-\lambda t)$ is the solution of the system and $m(t) \rightarrow 0$ as $t \rightarrow \infty$. Therefore the origin is the unique globally asymptotically stable equilibrium point of the system. This concludes the proof.
\end{proof}
\begin{remark}
Note that the regularization coefficient $\lambda$ plays a key role in establishing the stability of the mode estimates. To see the effect of the regularization term consider the iterates 
$$m_{n+1}=m_{n}+a_{n}\g f_{\e}(m_n).$$ The $h_{\infty}(.)$ corresponding to this update equation is identically 0 thereby violating assumption A4.
\end{remark}
Consider now the following ODE:
\begin{align} \label{final}
    \dot{x} = \nabla f_{\epsilon}(x(t)) - \lambda x.
\end{align}
Let $\bar{H}$ be the set of asymptotically stable equilibrium points of \eqref{final}. 
\begin{remark}
From our assumptions $\g f_\e (x) \xrightarrow{} \g f(x)$ as $\epsilon \xrightarrow{} 0$ for every point of continuity $x$ and $\bar{H} \rightarrow \{m^{*}\}$ as $\lambda \rightarrow 0$.
\end{remark}
We have the following as our main result.

\begin{theorem}
${m_n}, ~ n \geq 0$ obtained from Algorithm 1 satisfies $m_n \xrightarrow[]{} \bar{H}$ a.s.
\end{theorem}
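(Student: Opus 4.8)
The plan is to recognize the main theorem as a direct consequence of the stochastic approximation result Theorem~\ref{bmt}, whose four hypotheses have been assembled piece by piece in the preceding propositions. First I would cast Algorithm~1 into the canonical form \eqref{algo:optm}. Using the decomposition in \eqref{algo:update} together with Lemma~\ref{lemma2}, the update becomes
\begin{equation*}
m_{k+1}=m_{k}+a_{k}\big(\g h(m_k)+N_{k+1}\big),
\end{equation*}
where $\g h(m):=\g f_{\e}(m)-\lambda m$ and $N_{k+1}=d_{k+1}-\E[d_{k+1}|\F_k]$. This identifies the drift $\g h$ and the noise $N_{k+1}$ exactly as required by Theorem~\ref{bmt}, and the associated ODE \eqref{star} coincides with \eqref{final}, so that the limit set $H$ of Theorem~\ref{bmt} is precisely $\bar{H}$.

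Next I would invoke the four assumptions, each already verified in the excerpt: A1 holds for the choice $a_k=1/k$; A2 is Proposition~1, which shows $(N_k,\F_k)$ is a square-integrable martingale difference sequence with the linear-growth bound on the conditional second moment; A3 is Proposition~2, the Lipschitz continuity of $\g h$; and A4 is Proposition~3, which computes $\g h_{\infty}(m)=-\lambda m$ and shows the scaled ODE $\dot{m}=-\lambda m$ has the origin as its unique globally asymptotically stable equilibrium. With A1--A4 in force, Theorem~\ref{bmt} yields at once that $\sup_n\|m_n\|<\infty$ almost surely (stability) and that $m_k\to\bar{H}$ almost surely as $k\to\infty$, which is exactly the claimed statement.

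Since essentially all the analytical work resides in the propositions, the proof itself is little more than checking that the hypotheses of Theorem~\ref{bmt} are met and reading off its conclusion; the only care needed is bookkeeping, namely confirming that the drift, the martingale-difference noise, and the limiting ODE of the algorithm line up with the abstract scheme and that $\bar{H}$ is the correct target set. The main conceptual obstacle --- guaranteeing boundedness of the iterates in the absence of any a priori stability estimate --- is precisely what the regularization term $-\lambda m$ buys us through A4, as highlighted in the remark following Proposition~3; without it one would have $\g h_{\infty}\equiv 0$, violating A4, and Theorem~\ref{bmt} would no longer apply.
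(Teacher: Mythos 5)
Your proposal is correct and follows exactly the paper's route: the paper's own proof is the one-line statement that the result follows from the foregoing propositions and Theorem~\ref{bmt}, and you have simply spelled out the bookkeeping (identifying the drift $\g f_{\e}(m)-\lambda m$, the martingale-difference noise, and matching A1--A4 to Propositions 1--3). No gaps; your version is just a more explicit rendering of the same argument.
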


\begin{proof}
The result follows from the foregoing and Theorem 2.
\end{proof}

\section{Experiments}

In this section, we discuss the numerical performance of our algorithm. 
We implement our algorithm on known popular distributions. We collect $10^6$ samples from a known distribution and apply our algorithm for estimating the mode. The initial mode estimate is selected as the average of initial 1000 points. We consider Gaussian kernel for univariate distributions and multivariate Gaussian kernel for bivariate normal and Dirchlet distributions (see Table \ref{kernel}) for our experiments. The regularization coefficient $\lambda$ is chosen to be $10^{-5}$ and $\epsilon$ is set to 1. We perform 100 runs of the experiment and estimated mode is calculated as the mean of modes obtained over the 100 runs. The following Table \ref{table1} illustrates the performance of our algorithm (estimated mode) on standard distributions. We have also indicated the actual mode of the distribution in the Table \ref{table1}. The code for our experiments can be found at \url{https://github.com/raghudiddigi/Mode-Estimation}. 

\begin{table}[H]
\centering
\begin{tabular}{|l|l|l|}
\hline
\textbf{Distribution} & \textbf{Actual Mode} & \textbf{Estimated Mode $\pm$ Std.Dev} \\ \hline
Normal                & 10                   & 9.971783 $\pm$ 0.333930                 \\ \hline
Gamma                 & 5                    & 5.182626 $\pm$ 0.306337                 \\ \hline
Exponential           & 0                    & 0.697886 $\pm$ 0.007722             \\ \hline
Weibull               & 0                    & 0.697192 $\pm$ 0.007544                \\ \hline
Beta                  & 1                    & 0.900645 $\pm$ 0.001356                 \\ \hline
Bivariate Normal                 & [20; 15]                    & [20.030044; 15.015614 ] $\pm$ [0;0]                 \\ \hline
Dirichlet               & [0.5; 0.5]                    & [0.498404; 0.501579] $\pm$ [0;0]                 \\ \hline
\end{tabular}
\caption{Performance of our proposed algorithm on standard distributions}
\label{table1}
\end{table}
It is interesting to note that, though Exponential and Weibull densities are not smooth and do not satisfy our assumptions, the estimated mode obtained by our algorithm is closer to the actual mode.

In Figure \ref{Initial}, we show the performance of our algorithm with different initial points. For this purpose, we select Normal distribution with mean 10. We implement our algorithm with initial points 5,10 and 15 and plot the estimated mode over initial 50,000 iterations. We observe that the estimates of the mode in all the three cases converge towards the actual mode having value 10 as the number of iterations increase. This shows that the proposed algorithm is not very sensitive with respect to the initial mode estimate. These results thus confirm the practical utility of our algorithm. 
\begin{figure}[ht]
    \centering
    \includegraphics[scale = 0.26]{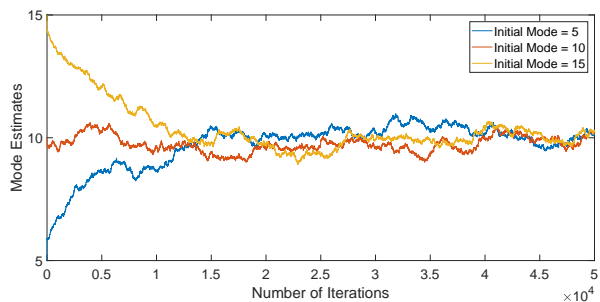}
    \caption{Performance of proposed algorithm with different initial points}
    \label{Initial}
\end{figure}

\section{Conclusions and Future Work}

In this paper, we proposed an online computationally efficient algorithm for computing the mode from the samples of an unknown  density. We have provided the proofs for the stability of the iterates and convergence of our algorithm. Next, we showed results of experiments on standard distributions that demonstrate the effectiveness of our algorithm in practice.


In future, we wish to propose second order algorithms based on the Newton's method in the place of gradient ascent. Newton's method is known to converge faster than the gradient ascent method.
Another interesting future direction would be to obtain finite sample error bounds and rate of convergence for our algorithm by utilizing central limit theorem for stochastic approximation.

\bibliographystyle{IEEEtran}
\bibliography{main}

\begin{thebibliography}{10}
\providecommand{\url}[1]{#1}
\csname url@samestyle\endcsname
\providecommand{\newblock}{\relax}
\providecommand{\bibinfo}[2]{#2}
\providecommand{\BIBentrySTDinterwordspacing}{\spaceskip=0pt\relax}
\providecommand{\BIBentryALTinterwordstretchfactor}{4}
\providecommand{\BIBentryALTinterwordspacing}{\spaceskip=\fontdimen2\font plus
\BIBentryALTinterwordstretchfactor\fontdimen3\font minus
  \fontdimen4\font\relax}
\providecommand{\BIBforeignlanguage}[2]{{%
\expandafter\ifx\csname l@#1\endcsname\relax
\typeout{** WARNING: IEEEtran.bst: No hyphenation pattern has been}%
\typeout{** loaded for the language `#1'. Using the pattern for}%
\typeout{** the default language instead.}%
\else
\language=\csname l@#1\endcsname
\fi
#2}}
\providecommand{\BIBdecl}{\relax}
\BIBdecl

\bibitem{jaakkola1994convergence}
T.~Jaakkola, M.~I. Jordan, and S.~P. Singh, ``Convergence of stochastic
  iterative dynamic programming algorithms,'' in \emph{Advances in neural
  information processing systems}, 1994, pp. 703--710.

\bibitem{zhou2017stochastic}
Z.~Zhou, P.~Mertikopoulos, N.~Bambos, S.~Boyd, and P.~W. Glynn, ``Stochastic
  mirror descent in variationally coherent optimization problems,'' in
  \emph{Advances in Neural Information Processing Systems}, 2017, pp.
  7040--7049.

\bibitem{zhou2017mirror}
Z.~Zhou, P.~Mertikopoulos, N.~Bambos, S.~Boyd, and P.~Glynn, ``Mirror descent
  in non-convex stochastic programming,'' \emph{arXiv preprint
  arXiv:1706.05681}, 2017.

\bibitem{parzen1962estimation}
E.~Parzen, ``On estimation of a probability density function and mode,''
  \emph{The annals of mathematical statistics}, vol.~33, no.~3, pp. 1065--1076,
  1962.

\bibitem{grenander1965some}
U.~Grenander, ``Some direct estimates of the mode,'' \emph{The Annals of
  Mathematical Statistics}, pp. 131--138, 1965.

\bibitem{chernoff1964estimation}
H.~Chernoff, ``Estimation of the mode,'' \emph{Annals of the Institute of
  Statistical Mathematics}, vol.~16, no.~1, pp. 31--41, 1964.

\bibitem{wegman1971note}
E.~J. Wegman, ``A note on the estimation of the mode,'' \emph{The Annals of
  Mathematical Statistics}, pp. 1909--1915, 1971.

\bibitem{dalenius1965mode}
T.~Dalenius, ``The mode--a neglected statistical parameter,'' \emph{Journal of
  the Royal Statistical Society. Series A (General)}, pp. 110--117, 1965.

\bibitem{venter1967estimation}
J.~Venter, ``On estimation of the mode,'' \emph{The Annals of Mathematical
  Statistics}, pp. 1446--1455, 1967.

\bibitem{robertson1974iterative}
T.~Robertson and J.~D. Cryer, ``An iterative procedure for estimating the
  mode,'' \emph{Journal of the American Statistical Association}, vol.~69, no.
  348, pp. 1012--1016, 1974.

\bibitem{bickel2002robust}
D.~R. Bickel, ``Robust estimators of the mode and skewness of continuous
  data,'' \emph{Computational statistics \& data analysis}, vol.~39, no.~2, pp.
  153--163, 2002.

\bibitem{bickel2006fast}
D.~R. Bickel and R.~Fr{\"u}hwirth, ``On a fast, robust estimator of the mode:
  comparisons to other robust estimators with applications,''
  \emph{Computational Statistics \& Data Analysis}, vol.~50, no.~12, pp.
  3500--3530, 2006.

\bibitem{bickel2003robust}
D.~R. Bickel, ``Robust and efficient estimation of the mode of continuous data:
  the mode as a viable measure of central tendency,'' \emph{Journal of
  statistical computation and simulation}, vol.~73, no.~12, pp. 899--912, 2003.

\bibitem{hsu2013efficient}
C.-Y. Hsu and T.-J. Wu, ``Efficient estimation of the mode of continuous
  multivariate data,'' \emph{Computational Statistics \& Data Analysis},
  vol.~63, pp. 148--159, 2013.

\bibitem{griffin2005multiscale}
L.~D. Griffin and M.~Lillholm, ``A multiscale mean shift algorithm for mode
  estimation,'' \emph{IEEE Transactions on Pattern Analysis and Machine
  Intelligence}, 2005.

\bibitem{kirschstein2016minimum}
T.~Kirschstein, S.~Liebscher, G.~C. Porzio, and G.~Ragozini, ``Minimum volume
  peeling: A robust nonparametric estimator of the multivariate mode,''
  \emph{Computational Statistics \& Data Analysis}, vol.~93, pp. 456--468,
  2016.

\bibitem{tsybakov1990recursive}
A.~B. Tsybakov, ``Recursive estimation of the mode of a multivariate
  distribution,'' \emph{Problemy Peredachi Informatsii}, vol.~26, no.~1, pp.
  38--45, 1990.

\bibitem{tikhonov2013numerical}
A.~N. Tikhonov, A.~Goncharsky, V.~Stepanov, and A.~G. Yagola, \emph{Numerical
  methods for the solution of ill-posed problems}.\hskip 1em plus 0.5em minus
  0.4em\relax Springer Science \& Business Media, 2013, vol. 328.

\bibitem{borkar2008stochastic}
V.~S. Borkar, \emph{Stochastic approximation: a dynamical systems
  viewpoint}.\hskip 1em plus 0.5em minus 0.4em\relax Springer, 2009, vol.~48.

\bibitem{bhatnagar2012stochastic}
S.~Bhatnagar, H.~Prasad, and L.~Prashanth, \emph{Stochastic recursive
  algorithms for optimization: simultaneous perturbation methods}.\hskip 1em
  plus 0.5em minus 0.4em\relax Springer, 2012, vol. 434.

\bibitem{berge1997topological}
C.~Berge, \emph{Topological Spaces: including a treatment of multi-valued
  functions, vector spaces, and convexity}.\hskip 1em plus 0.5em minus
  0.4em\relax Courier Corporation, 1997.

\bibitem{wheeden2015measure}
R.~L. Wheeden, \emph{Measure and integral: an introduction to real
  analysis}.\hskip 1em plus 0.5em minus 0.4em\relax CRC press, 2015, vol. 308.

\end{thebibliography}

\end{document}